\documentclass[letterpaper]{article} 
\usepackage{aaai2027}  
\usepackage[hyphens]{url}  
\usepackage{graphicx} 
\usepackage{natbib}  
\usepackage{caption} 

\usepackage{subcaption}
\usepackage{array}
\usepackage{amsmath}
\usepackage{amssymb}
\usepackage{amsthm}
\usepackage{booktabs}
\usepackage{multirow}
\newtheorem{lemma}{Lemma}
\newcommand{\elementname}[1]{\emph{#1}}

\title{Element-Aware Group Learning\\for E-Commerce Image Generation}
\author{
    Jingtong Chen\equalcontrib,
    Jiahui Wang\equalcontrib,
    Xue Zhao\corresponding,
    ShaoGuo Liu\corresponding,
    Minghao Li
}
\affiliations{}

\begin{document}

\nocopyright
\maketitle

\begin{abstract}
Recent advances in image generation and editing have made prompt quality a key bottleneck for e-commerce creatives. Vision-language models (VLMs) can generate image-editing prompts from product images and metadata, but further improving their prompt-writing capabilities requires post-training with feedback from the generated images. Group Relative Policy Optimization (GRPO) is a natural framework for such outcome-level reward optimization. However, it assigns credit only at the full-prompt level, even though image quality often depends on specific design elements such as composition, background, and the presentation of selling points. Existing fine-grained credit assignment methods typically require step-level supervision or learned critics. To address this, we propose EAGLE-GRPO (\textbf{E}lement-\textbf{A}ware \textbf{G}roup \textbf{L}earning for \textbf{E}-Commerce Image Generation), which decomposes the group-centered reward over predefined elements. We cast element-level credit assignment as a kernel ridge regression problem and derive a closed-form solution, without additional rollouts or separate credit-assignment models. This yields interpretable per-element advantages and more precise policy updates. Experiments show that EAGLE-GRPO sustains performance gains over more training steps before plateauing and generates prompts that produce higher-quality e-commerce images than competitive VLM prompt-writing baselines.
\end{abstract}

\section{Introduction}

Modern image editors can produce high-quality e-commerce visuals, but the prompt itself spans preservation, layout, headline, scene, and other design decisions. VLMs are commonly used to enhance such prompts from a product image and metadata (e.g., title, description), then refine them via reward feedback. 
GRPO~\cite{shao2024deepseekmath} has emerged as a popular framework for such reward-driven VLM training, as it removes the need for a separate value model by normalizing rewards within sampled groups.
However, in standard GRPO each candidate receives one scalar image reward, and the group-normalized advantage is broadcast uniformly to all tokens. This coarse credit assignment conflates good and bad decisions, allowing weak fields to ride high rewards earned by strong ones.

\begin{figure}[htbp]
\centering
\begin{subfigure}[t]{0.49\columnwidth}
\centering
\includegraphics[width=\linewidth]{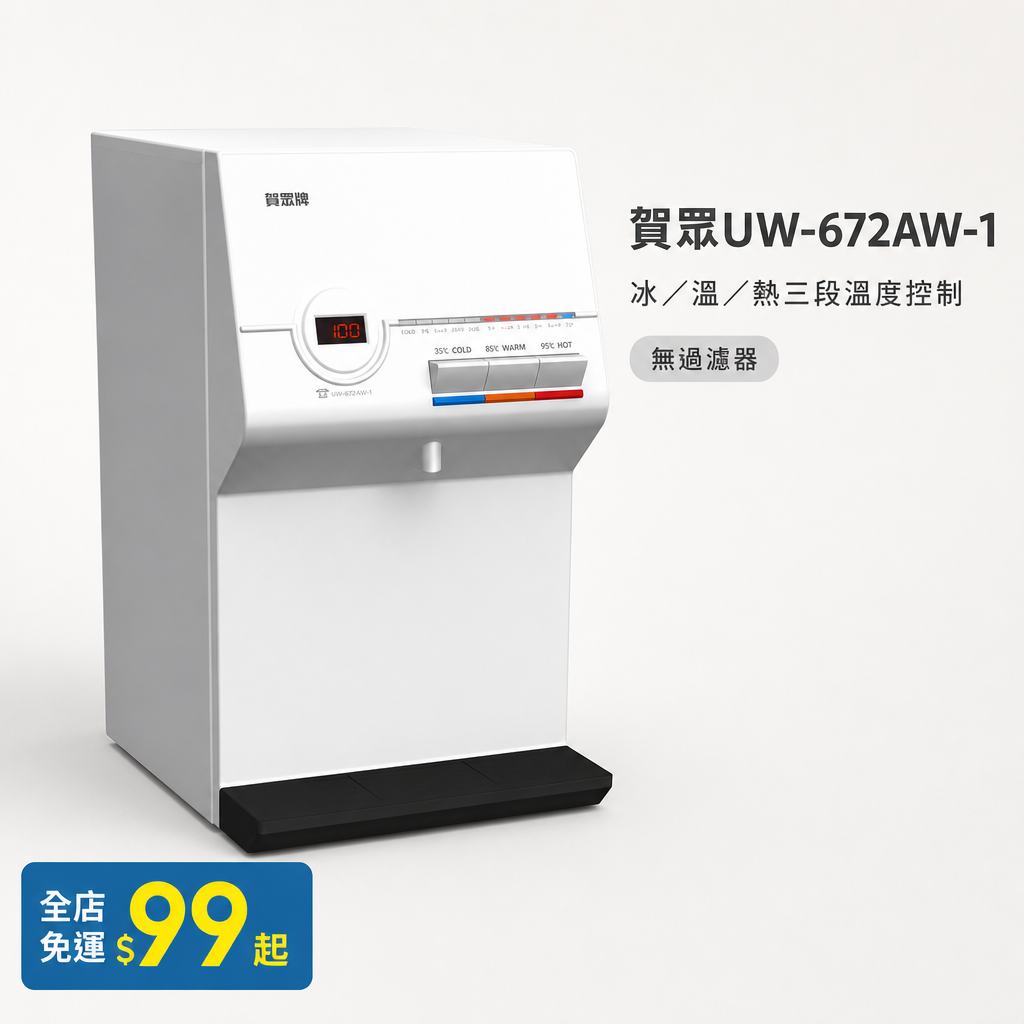}
\caption{Step 40 (reward=0.836)}
\end{subfigure}\hfill
\begin{subfigure}[t]{0.49\columnwidth}
\centering
\includegraphics[width=\linewidth]{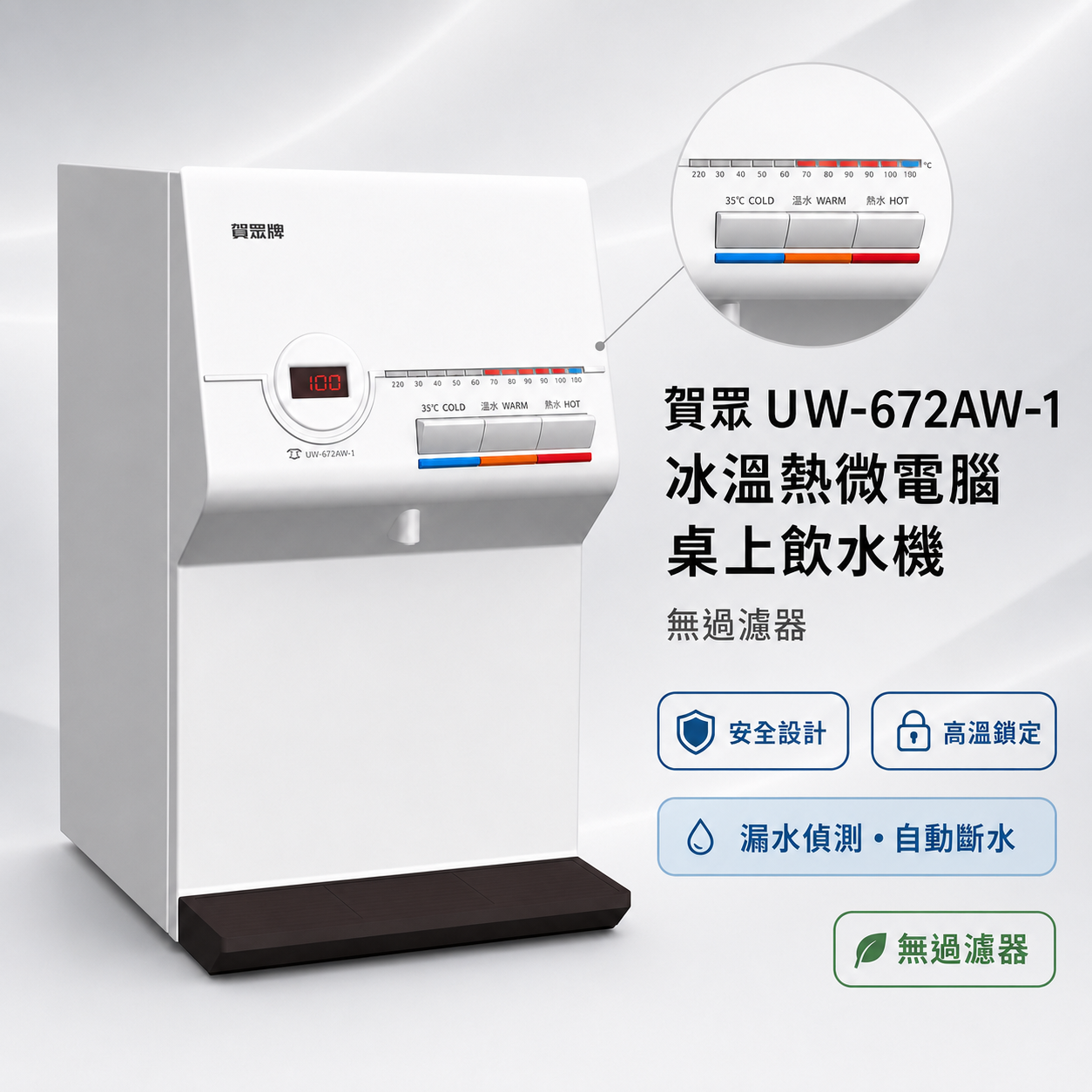}
\caption{Step 200 (reward=0.886)}
\end{subfigure}
\caption{Illustrative evolution of visual design elements during \textsc{Eagle}-GRPO training. Compared with the output at training step 40, the output at training step 200 includes a close-up supporting visual and multiple grounded feature callouts, yielding a more product-specific and informative presentation, consistent with its higher image reward.}
\label{fig:tw-prompt-evolution}
\end{figure}
\FloatBarrier



We argue that prompt optimization should operate at the level of design elements, which are visually entangled in the rendered image but textually separable in the structured prompt. Figure~\ref{fig:tw-prompt-evolution} illustrates how training refines individual visual elements by adding a supporting visual and grounded feature callouts while preserving the same product. This reframes the learning problem: given only a holistic image reward, how do we assign credit to individual elements?



A natural approach is Shapley-value attribution~\cite{shapley1953value,cao2025scar,li2026sharp}, which assigns each element's credit by comparing outcomes with and without that element. Yet this element-level variation is already present in a GRPO rollout group as a byproduct of sampling, since different rollouts differ in which elements they include and how they are realized.
Building on this free variation, we therefore propose \textsc{Eagle}-GRPO (\textbf{E}lement-\textbf{A}ware \textbf{G}roup \textbf{L}earning for \textbf{E}-Commerce Image Generation), which assigns per-element credit from the reward variation already present in a GRPO rollout group. It parses each prompt into tagged design elements, pools their token spans into embeddings that serve as kernel features, and decomposes the centered image reward into per-element contributions via a closed-form kernel-ridge solution. The resulting credits are mapped back to token spans through a residual-preserving transformation, such that the token average equals the standard GRPO advantage (\textbf{Lemma~\ref{lem:preservation}}). 

The main contributions are:
\begin{itemize}
   \item We introduce a kernelized element-credit mechanism for GRPO that decomposes image rewards into per-element credits via a closed-form kernel-ridge solution, requiring no additional counterfactual rollouts.
    \item The element-credit design preserves the standard GRPO signal by construction: averaging the token-level element advantages over all tokens exactly recovers the group-normalized advantage, ensuring that credit is redistributed across tokens within each sample while preserving the per-sample mean advantage (Lemma~\ref{lem:preservation}).
    \item We evaluate against competitive prompt-writing baselines with GPT-Image-2, and FLUX.2~[klein] as image editors; \textsc{Eagle}-GRPO outperforms Standard GRPO on both editors and is preferred by human raters in 62\% of blind pairwise comparisons.
\end{itemize}

\section{Related Work}

\paragraph{Fine-Grained Credit Assignment for GRPO.}
Group Relative Policy Optimization (GRPO)~\cite{shao2024deepseekmath} and variants such as DAPO~\cite{yu2025dapo} normalize rewards within sampled groups but assign the same advantage to every token. Recent efforts to refine the credit itself include critic-free token reweighting via intrinsic proxies such as entropy or logit confidence~\cite{tan2025gtpo,he2026eapo}, critic-based methods that reintroduce learned value functions~\cite{schulman2016highdimensional}, and process reward models that densify the signal via step-level supervision~\cite{lightman2023lets,setlur2025rewarding,khalifa2025thinkprm,zheng2025survey}. Yet none of these refinements target the semantic structure of structured prompts: in our setting, the natural unit of credit is the prompt element (headline, scene, lighting, etc) rather than the token. \textsc{Eagle}-GRPO takes a different route: it parses the prompt into tagged elements and assigns each element a credit derived from the reward variation already present in a GRPO rollout group, with no trained value model or external annotation.

\paragraph{Reward Attribution Strategies.}
A separate line of work studies how to attribute a single outcome-level reward to finer output units. Counterfactual methods evaluate outcomes under different configurations and assign credit via cooperative-game solutions such as the Shapley value~\cite{shapley1953value}. For example, SCAR~\cite{cao2025scar} masks token spans and queries a reward model per configuration, SHARP~\cite{li2026sharp} uses leave-one-out evaluation for multi-agent tool-use, and SAVOIR~\cite{feng2026savoir} applies expected-utility Shapley values to multi-turn dialogue. These methods are principled, but they require interventional sampling beyond the rollout group, since each configuration demands an additional generation and reward query.

\textsc{Eagle}-GRPO takes the correlational route instead: rather than constructing counterfactual configurations, it attributes credit from reward variation already present across candidates. This attribution is formulated as an additive regression over element features. Its closed-form kernel-ridge decomposition is an instance of additive RKHS regression~\cite{wahba1990spline,gu2002smoothing,kandasamy2016salsa}, where each element defines a reproducing kernel and the centered reward is fit by an additive model regularized per component via the representer theorem~\cite{kimoneldorf1971representer}. A residual-preserving transformation then maps these credits back to token spans, such that the token-level average exactly recovers the standard GRPO advantage.
Prior GRPO-based credit assignment has focused on refining credit along tokens, steps, or segments; decomposition over structured, semantically tagged prompt elements remains largely unexplored. \textsc{Eagle}-GRPO addresses this setting.
 
\paragraph{Prompt Writing for Image Generation.}
Prior prompt-writing work uses three reward paradigms. Promptist~\cite{hao2023optimizing} trains a writer with PPO~\cite{schulman2017ppo} on external aesthetic and relevance scores. PromptEnhancer~\cite{wang2025promptenhancer} relies on a separate alignment evaluator to score chain-of-thought rewrites. Self-Rewarding LVLM~\cite{yang2025selfrewarding} unifies the writer and judge in a single VLM, iterating via self-generated preference signals.
All optimize the prompt as a single string against a scalar reward; none exploits the fact that a low image score often stems from one element rather than the entire prompt. \textsc{Eagle}-GRPO instead defines a structured prompt schema that decomposes the prompt into tag-delimited fields with well-defined token spans, enabling per-element credit assignment.

\begin{figure*}[h]
\centering
\includegraphics[width=\textwidth]{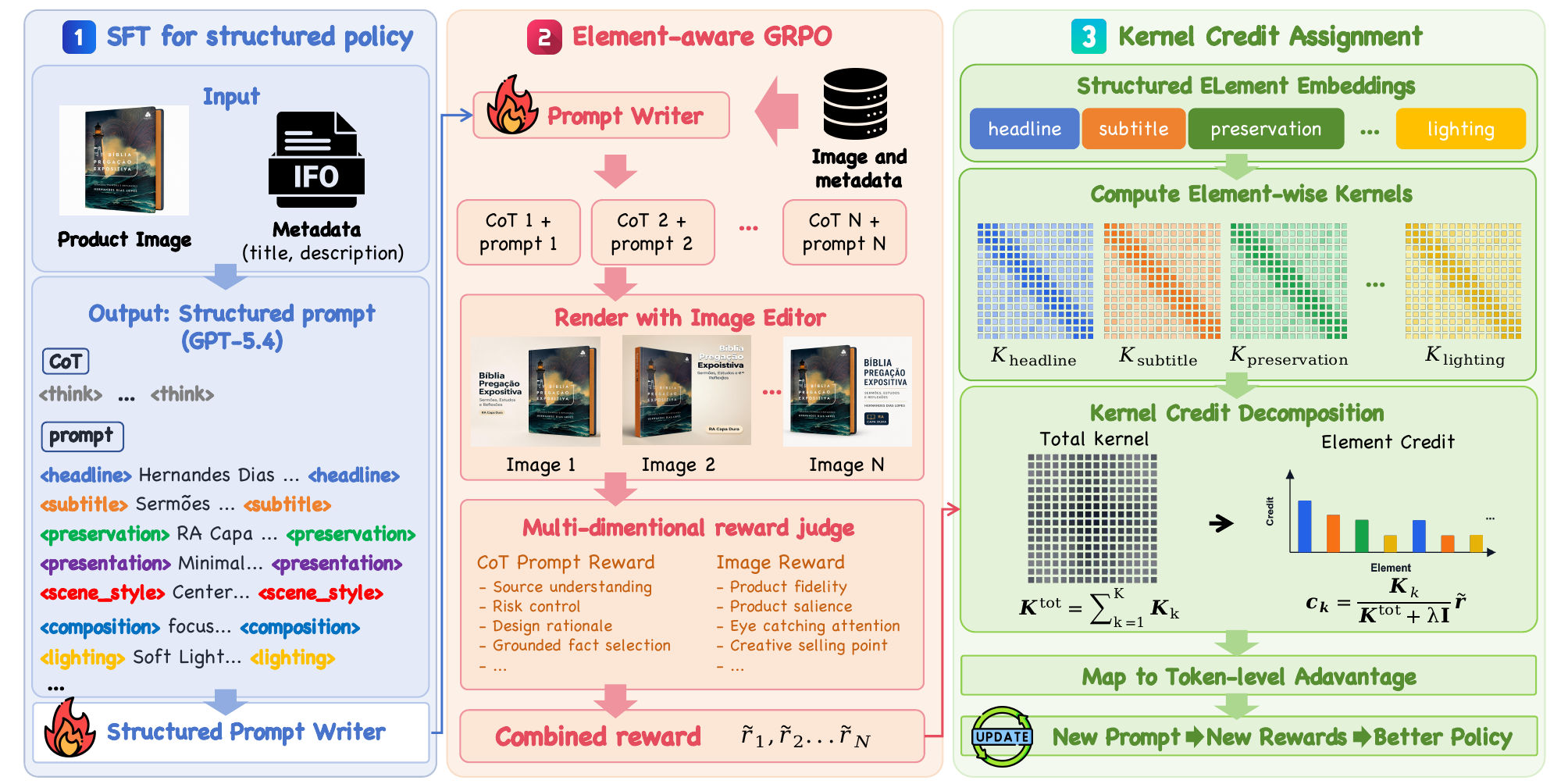}
\caption{Overview of \textsc{Eagle}-GRPO. \textbf{Stage 1: structured SFT}
teaches the prompt writer to analyze the product image and metadata, and produce an structured prompt with 18 tagged elements.
\textbf{Stage 2: element-aware GRPO} samples $N$ structured prompts, renders
them with an image editor, and obtains prompt- and image-level scores from a
multidimensional reward judge. \textbf{Stage 3: kernel credit assignment}
constructs element embeddings, builds an element-wise kernel $K_k$
for each element $k$, and forms the total kernel
$K^{tot}$. Here, $\tilde{\mathbf r}$ is the group-centered
combined reward vector, and
$\mathbf c_k$ is the reward
contribution attributed to element $k$. The element credits are mapped to token-level advantages and used to update the prompt policy.}
\label{fig:pipeline}
\end{figure*}

\section{Methodology}
\subsection{Prompt Element Schema}
E-commerce prompt writing combines product-preservation constraints with
commercial presentation decisions such as headline, layout, scene, lighting, etc. 
To make these decisions individually creditable, we structure each prompt as a
tag-delimited schema: every design decision occupies a dedicated field delimited
by atomic tags (e.g., \texttt{<headline>}...\texttt{<headline>}). Each such
field defines one \emph{element}, the atomic unit of credit assignment.
Given a product image $I_{\mathrm{prod}}$ and product metadata $m$, 
\begin{equation}
x=(I_{\mathrm{prod}},m),
\end{equation}
a VLM policy
first produces a reasoning trace $z^{\mathrm{cot}}$ that analyzes the product and
plans the design, then emits a schema-delimited prompt:
\begin{equation}
y=(z^{\mathrm{cot}},z), \qquad z=(z_1,\ldots,z_K),
\end{equation}
where $z$ is the full structured prompt and each $z_k$ is the $k$-th element. 
The schema gives each design decision a stable boundary tag, turning free-form
prompt into a set of creditable fields without an external parser. Table~\ref{tab:prompt-schema-elements} lists the 18 elements. Required and optional
fields capture creative design decisions; constraint-only fields
(preservation, negative) enforce faithfulness and safety boundaries. 
We use a strong VLM to generate high-quality image-editing prompts containing self-designed tags. We then supervised-fine-tune (SFT) a student VLM on this data to produce the schema reliably. 
Each schema tag is added to the tokenizer as an atomic delimiter for reliable
field parsing and hidden-state pooling. Figure~\ref{fig:pipeline} illustrates the full workflow.


\subsection{Element-Aware GRPO}
After SFT, for each training example $x$, the policy
$\pi_{\theta_{\mathrm{old}}}$ samples a group of $N$ outputs
$\{y_1,\ldots,y_N\}$, where
$y_i\sim\pi_{\theta_{\mathrm{old}}}(\cdot\mid x)$.
Writing $y_i=(z_i^{\mathrm{cot}},z_i)$, the structured prompt $z_i$ is rendered
by an editor $G$ and judged by a VLM reward model $R_{\mathrm{image}}$
(a strong off-the-shelf VLM, Gemini-2.5-Flash in our experiments):
\begin{equation}
I_{\mathrm{out},i}=G(I_{\mathrm{prod}},z_i), \qquad
r_i=R_{\mathrm{image}}(I_{\mathrm{prod}},I_{\mathrm{out},i},m).
\end{equation}
Standard GRPO normalizes rewards within the group:
\begin{equation}
A_{\mathrm{image},i}
=\frac{r_i-\mu}{\sigma+\epsilon_r},
\label{eq:standard-image-advantage}
\end{equation}
where $\mu$ and $\sigma$ are the group reward mean and standard deviation.
Broadcasting this scalar to all loss-bearing prompt tokens cannot distinguish which structured decisions made a rollout successful. 
\textsc{Eagle}-GRPO instead differentiates advantage across schema elements:
constraint-only fields (preservation, negative) receive the standard
image-level advantage $A_\mathrm{image}$
 ; for non-constraint fields,
the kernel module produces element-conditioned token credit $A_{\mathrm{elem},i,t}$. The two are blended as
\begin{equation}
A_{\mathrm{mix},i,t}
=(1-\eta)A_{\mathrm{image},i}
+\eta A_{\mathrm{elem},i,t}.
\label{mixA}
\end{equation}
where $\eta \in [0,1]$ controls the blend between image-level and element-level advantage.
By design, the token average of $A_{\mathrm{elem}}$ recovers 
$A_{\mathrm{image}}$, so element credit only redistributes within a rollout without changing its image-level mean.

\begin{table}[t]
\centering
\scriptsize
\setlength{\tabcolsep}{2.2pt}
\begin{tabular*}{\columnwidth}{@{\extracolsep{\fill}}p{0.42\columnwidth}p{0.48\columnwidth}@{}}
\toprule
\textbf{Field} & \textbf{Function} \\
\midrule
\multicolumn{2}{l}{\textbf{Constraint-only (not credited)}} \\
\emph{preservation\_constraints} & Preserve identity and geometry \\
\emph{negative\_constraints} & Prohibit unsupported edits \\
\midrule
\multicolumn{2}{l}{\textbf{Required (always credited)}} \\
\emph{product\_presentation} & Hero treatment \\
\emph{headline} & Primary message \\
\emph{composition\_layout} & Product / text arrangement \\
\emph{scene\_style} & Background and mood \\
\emph{lighting\_rendering} & Lighting and finish \\
\midrule
\multicolumn{2}{l}{\textbf{Optional (credited when present)}} \\
\emph{subtitle} & Secondary hierarchy \\
\emph{feature\_callouts} & Visualize benefits \\
\emph{specification\_information} & Specifications \\
\emph{quantity\_bundle\_information} & Count or bundle description \\
\emph{variant\_information} & Product variants \\
\emph{price\_promotion\_information} & Promotion \\
\emph{seller\_logistics\_information} & Seller / delivery facts \\
\emph{trust\_information} & Trust cues \\
\emph{usage\_information} & Use scenario \\
\emph{comparison\_visual} & Comparison \\
\emph{supporting\_visuals} & Insets or detail views \\
\bottomrule
\end{tabular*}
\caption{Structured prompt elements and credit rules. Constraint fields are not credited; required fields are always credited; optional fields are credited only when non-empty.}
\label{tab:prompt-schema-elements}
\end{table}

\subsection{Element-Level Credit Decomposition }
For every rollout, we construct a fixed 18-field representation, with kernel attribution applied to the 16 non-constraint fields. A non-empty field is represented by its pooled token embedding, whereas an absent optional field is represented by the zero vector:

\begin{equation}
\mathbf{v}_{i,k} =
\begin{cases}
\mathrm{LayerNorm}\!\left(\dfrac{1}{|S_{i,k}|}
\displaystyle\sum_{t \in S_{i,k}} \mathbf{h}_{i,t}\right), & |S_{i,k}|>0,\\[6pt]
\mathbf{0}, & |S_{i,k}|=0,
\end{cases}
\end{equation}
where $S_{i,k}$ denotes the set of token positions belonging to element $k$
in rollout $i$, and $\mathbf{h}_{i,t}$ denotes the last-layer hidden state computed by
$\pi_{\theta_{\mathrm{old}}}$ at rollout time. Consequently, each field kernel jointly captures field selection
(presence versus absence) and semantic variation among its non-empty
realizations. Credit is inferred from within-group variation, so embeddings are
centered within each rollout group:

\begin{equation}
\tilde{\mathbf{v}}_{i,k} = \mathbf{v}_{i,k} - \frac{1}{N}\sum_{j=1}^{N} \mathbf{v}_{j,k}.
\end{equation}
Within each group, the element kernel is
\begin{equation}
\mathbf{K}_k(i,j) = \tilde{\mathbf{v}}_{i,k}^{\top}\tilde{\mathbf{v}}_{j,k}.
\end{equation}
Let $\tilde r_i = r_i - \mu$ denote the centered image reward for rollout $i$,
where $\mu$ is the group mean, and let
$\tilde{\mathbf r} = [\tilde r_1,\ldots,\tilde r_N]^\top \in \mathbb{R}^N$
denote the vector of centered rewards within the group. We estimate an
element-contribution vector $\mathbf{c}_k \in \mathbb{R}^N$ for each element
$k$ by
\begin{equation}
\min_{\{\mathbf{c}_{k}\}_{k=1}^{K}} \quad
\left\| \tilde{\mathbf r}- \sum_{k=1}^{K} \mathbf{c}_{k} \right\|_2^2
+ \lambda \sum_{k=1}^{K}
\mathbf{c}_{k}^{\top}
(\mathbf{K}_{k} + \epsilon_K \mathbf{I})^{-1}
\mathbf{c}_{k}.
\end{equation}
The regularizer restricts element $k$ to claim credit only along directions
supported by its own variation within the group. The solution takes the closed
form $\mathbf{c}_{k} = \mathbf{K}_{k}\boldsymbol{\omega}$. With $\mathbf{K}^{\mathrm{tot}} = \sum_k \mathbf{K}_{k}$,
\begin{equation}
\boldsymbol{\omega} = (\mathbf{K}^{\mathrm{tot}} + \lambda \mathbf{I})^{-1}\tilde{\mathbf r},
\end{equation}
and therefore
\begin{equation}
\mathbf{c}_k = \mathbf{K}_k \boldsymbol{\omega} = \mathbf{K}_k (\mathbf{K}^{\mathrm{tot}} + \lambda \mathbf{I})^{-1} \tilde{\mathbf r}.
\end{equation}
Let $c_{i,k}=[\mathbf{c}_k]_i$ denote the contribution of element $k$
to rollout $i$. The portion of $\tilde r_i$ not attributed to any field is kept as a residual, 
\begin{equation}
\nu_i = \tilde r_i - \sum_{k=1}^{K} c_{i,k}.
\label{eq:residual}
\end{equation}

In practice, we apply conservative identifiability checks: near-zero reward variance or ill-conditioned kernels fall back to coarse image credit; low-energy element kernels receive zero contribution; highly aligned kernels are down-weighted. The decomposition is computed independently per group and detached from the policy gradient.

\subsection{Token-Level Advantage Mapping}
Each element has a token span in the final prompt. The element credit $c_{i,k}$ and residual $\nu_i$ are in reward space; to make them comparable with the standard GRPO advantage (Eq.~\ref{eq:standard-image-advantage}), we rescale them by the group standard deviation $\sigma$, obtaining the element-level advantage $Q_{i,k}$ and the per-rollout residual advantage $U_i$:
\begin{equation}
Q_{i,k} = \frac{c_{i,k}}{\sigma + \epsilon_r}, \quad
U_i = \frac{\nu_i}{\sigma + \epsilon_r}.
\label{eq:Q-U-def}
\end{equation}
These are then distributed across tokens within each element. For each element $k$ of $i-$th rollout, let $L_{i,k}$ be the number of loss-bearing tokens; $T_{i} = \sum_k L_{i,k}$ is the total count of loss-bearing tokens in $i-$th rollout. For a token $t$, let $k(t)$ denote the element containing $t$; the element-aware advantage is:

\begin{equation}
A_{\mathrm{elem},i,t} = U_i + \frac{T_i \cdot Q_{i,k(t)}}{L_{i,k(t)}}.
\label{eq:element-aware-advantage}
\end{equation}
Although $A_{\mathrm{elem},i,t}$ assigns different advantages to different tokens, its mean over all loss-bearing tokens in a rollout exactly recovers the rollout-level advantage $A_{\mathrm{image},i}$:

\begin{lemma}[Advantage Preservation]
\label{lem:preservation}
The $T_{i}$-token average of $A_{\mathrm{elem},i,t}$ over the loss-bearing tokens of candidate $i$ equals the original group-normalized GRPO advantage $A_{\mathrm{image},i}$, as in Eq.~\eqref{eq:standard-image-advantage}. That is,
\begin{equation}
\frac{1}{T_i} \sum_{t=1}^{T_i} A_{\mathrm{elem},i,t}
= A_{\mathrm{image},i}.
\end{equation}
\end{lemma}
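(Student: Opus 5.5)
The proof is a direct computation. Sum Eq.~\eqref{eq:element-aware-advantage} over the loss-bearing tokens of rollout $i$, grouping the tokens by the element $k(t)$ that contains them. Then use the residual identity Eq.~\eqref{eq:residual} to collapse the element credits back into the centered reward. We take as given the setup of the token mapping: every loss-bearing token belongs to exactly one element span, so the spans partition the $T_i=\sum_k L_{i,k}$ tokens.

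\textbf{Step 1 (constant residual term).} The term $U_i$ does not depend on $t$. Summed over all $T_i$ tokens it contributes $T_iU_i$.

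\textbf{Step 2 (element terms).} Split the sum over $t$ into blocks, one per element $k$. Within block $k$ the summand $T_iQ_{i,k}/L_{i,k}$ is constant and appears $L_{i,k}$ times, so the block contributes $T_iQ_{i,k}$. Elements with $L_{i,k}=0$ contribute no tokens and are simply omitted. The total over all elements is therefore $T_i\sum_k Q_{i,k}$.

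\textbf{Step 3 (collapse the credits).} Dividing by $T_i$ gives an average of $U_i+\sum_k Q_{i,k}$. By Eq.~\eqref{eq:Q-U-def} this equals $(\nu_i+\sum_k c_{i,k})/(\sigma+\epsilon_r)$. By Eq.~\eqref{eq:residual} the numerator is $\tilde r_i=r_i-\mu$, so the average equals $A_{\mathrm{image},i}$ in Eq.~\eqref{eq:standard-image-advantage}.

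The only delicate point is the bookkeeping of which tokens and elements enter the sums. There are two ways this could go wrong.

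\emph{Sum indexing.} The sum in Eq.~\eqref{eq:residual} runs over the $K$ credited elements. The token sum must run over exactly those elements' spans, so that $T_i=\sum_k L_{i,k}$ and the two sums coincide.

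\emph{Elements carrying credit but no tokens.} Such an element would need $c_{i,k}\neq 0$ with $L_{i,k}=0$. This cannot happen for an absent optional field: its embedding is $\mathbf 0$, but after group centering $\tilde{\mathbf v}_{i,k}$ need not be zero, so the kernel alone does not rule it out. I would handle this in one of two ways. Either state the convention that a credit with $L_{i,k}=0$ is absorbed into $\nu_i$, which leaves the identity $\tilde r_i=\nu_i+\sum_k c_{i,k}$ unchanged. Or observe that the sum in Eq.~\eqref{eq:element-aware-advantage} only involves present elements, and define $\nu_i$ over the same index set.

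The same reasoning covers the fallback cases (ill-conditioned kernels, zeroed or down-weighted kernels). In each of them the residual is still defined as whatever part of $\tilde r_i$ is not attributed to any element, so the identity $\tilde r_i=\nu_i+\sum_k c_{i,k}$ holds by construction. With that identity fixed, the lemma is a two-line telescoping argument.
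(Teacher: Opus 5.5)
Your proof is correct and follows the paper's argument: both reduce the $T_i$-token average of Eq.~\eqref{eq:element-aware-advantage} to $U_i+\sum_k Q_{i,k}$ and collapse it with the residual identity Eq.~\eqref{eq:residual}, and your version also spells out the block counting and element bookkeeping that the paper's ``by construction'' step skips. One fix: your sentence ``This cannot happen for an absent optional field'' should read ``This \emph{can} happen''---because the centered credits satisfy $\sum_i c_{i,k}=0$, rollouts lacking element $k$ receive nonzero credit whenever the rollouts containing it have nonzero net credit, so your convention of folding such credit into $\nu_i$ (which matches the rule in Table~\ref{tab:prompt-schema-elements} that optional fields are credited only when non-empty) is genuinely required for the lemma to hold.
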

\noindent\textit{Proof.}
From the residual definition Eq.~\eqref{eq:residual}, $\sum_k c_{i,k} + \nu_i = \tilde r_i$.
Dividing by $\sigma+\epsilon_r$ and applying Eq.~\eqref{eq:Q-U-def} yields
\begin{equation}
\sum_k Q_{i,k} + U_i = A_{\mathrm{image},i}.
\end{equation}
By construction of $A_{\mathrm{elem},i,t}$ in Eq.~\eqref{eq:element-aware-advantage}, its $T_i$-token average gives the same sum:
\begin{equation}
\frac{1}{T_i}\sum_{t=1}^{T_i} A_{\mathrm{elem},i,t}
= U_i + \sum_k Q_{i,k}
= A_{\mathrm{image},i}.
\end{equation}
%
%
Lemma~1 shows that element credit only redistributes advantage within a completion; it preserves the rollout-level image signal. For stability, we mix it with the standard image-level advantage as defined in Eq.~\eqref{mixA}.

\subsection{Training Objective}
The final GRPO objective uses $A_{\mathrm{mix},i,t}$ for all loss-bearing tokens; on constraint-only fields (preservation, negative) we set $\eta=0$, reducing $A_{\mathrm{mix}}$ to the standard image-level advantage $A_{\mathrm{image}}$.

\begin{equation}
\begin{aligned}
\mathcal{J}(\theta)
= {} & \mathbb{E}_{i,t}\!\left[
\min\!\left(
\rho_{i,t} A_{\mathrm{mix},i,t},
\bar{\rho}_{i,t} A_{\mathrm{mix},i,t}
\right)
\right] \\
& - \beta\, \mathrm{KL}\!\left[\pi_\theta \| \pi_{\mathrm{ref}}\right],
\end{aligned}
\end{equation}

where $\rho_{i,t} = \pi_\theta(y_{i,t} \mid x,y_{i<t} ) / \pi_{\theta_{\mathrm{old}}}(y_{i,t} \mid x,y_{i<t})$ is the importance ratio and $\bar{\rho}_{i,t}=\mathrm{clip}(\rho_{i,t},1{-}\epsilon_{\mathrm{clip}},1{+}\epsilon_{\mathrm{clip}})$. The element-credit decomposition contributes only through $A_{\mathrm{mix}}$ and introduces no additional learnable parameters.

\section{Experiments}
\label{sec:experiments}

We first describe the training and evaluation setup, and then compare \textsc{Eagle}-GRPO with strong prompt-writing baselines through benchmark and human evaluation.

\subsection{Experimental Setup}

\paragraph{Supervised initialization.}
We initialize the prompt writer from Qwen3-VL-8B-Instruct. The model is supervised-fine-tuned on 24,790 teacher responses from 6,240 products. For each product, the teacher generates up to four prompt profiles: sparse, balanced, visually expressive, and commercially dense. This exposes the model to both selective omission and detailed, grounded design choices.

Each response contains a short reasoning summary and an 18-field structured prompt. We add each schema tag to the tokenizer as a separate token and train its embedding and LM-head rows together with the SFT LoRA adapter. All other parameters remain frozen. The added tags provide explicit token-level boundaries for element parsing and hidden-state pooling.

\paragraph{GRPO training.}
 

Starting from the SFT-trained prompt writer, we train two editor-specific \textsc{Eagle}-GRPO policies, one for GPT-Image-2 and one for FLUX.2~[klein]. 
We merge the SFT LoRA weights into the base model to preserve the prompt schema, then add a fresh LoRA adapter for GRPO. Sampled prompts are rendered by the corresponding editor and scored by Gemini-2.5-Flash using the benchmark rubric (6 hard gates and 11 quality dimensions, Table~\ref{tab:judge-criteria}); failing any hard gate can zero the reward. For training stability, we adopt the DAPO objective~\cite{yu2025dapo} (with $\beta=0$ and truncation masking) for all GRPO training. Table~\ref{tab:implementation} summarizes the main hyperparameters.


\begin{table}[t]
\centering
\scriptsize
\setlength{\tabcolsep}{3pt}
\begin{tabular*}{\columnwidth}{@{\extracolsep{\fill}}p{0.31\columnwidth}p{0.28\columnwidth}p{0.31\columnwidth}@{}}
\toprule
\textbf{Setting} & \textbf{SFT} & \textbf{GRPO} \\
\midrule
Base model & Qwen3-VL-8B & SFT-ed Prompt Writer\\
Trainable modules & LoRA + new tag rows & LoRA \\
LoRA $(r,\alpha)$ & $(16,32)$ & $(16,32)$ \\
Learning rate & $2\!\times\!10^{-5}$ & $1\!\times\!10^{-6}$ \\
Optimizer/objective & AdamW & DAPO \\
LR schedule & cosine & constant \\
Effective batch & 8 responses & 80 completions \\
Precision & BF16 & BF16 \\
Rollouts $N$ & -- & 20 per product \\
\bottomrule
\end{tabular*}
\caption{Training hyperparameters for supervised initialization and \textsc{Eagle}-GRPO.}
\label{tab:implementation}
\end{table}


\begin{table}[htbp]
\centering
\scriptsize
\setlength{\tabcolsep}{3pt}
\begin{tabular*}{\columnwidth}{@{\extracolsep{\fill}}p{0.18\columnwidth}p{0.36\columnwidth}p{0.36\columnwidth}@{}}
\toprule
\textbf{Type} & \multicolumn{2}{c}{\textbf{Criteria}} \\
\midrule
\textbf{Hard gates} 
& \begin{tabular}[t]{@{}l@{}}
Product identity \\
Product count \\
Variant consistency
\end{tabular}
& \begin{tabular}[t]{@{}l@{}}
Logo/package text \\
Unsupported claims \\
Severe visual damage
\end{tabular} \\
\textbf{Quality (1--10)} 
& \begin{tabular}[t]{@{}l@{}}
Product fidelity \\
Product salience \\
Eye-catching attention \\
Creative selling point \\
Color harmony \\
Layout hierarchy
\end{tabular}
& \begin{tabular}[t]{@{}l@{}}
Background/scene fit \\
Visual communication \\
Text readability \\
Commercial desire \\
Professional polish
\end{tabular} \\
\bottomrule
\end{tabular*}
\caption{Judge criteria: hard gates and quality dimensions.}
\label{tab:judge-criteria}
\end{table}

\paragraph{Benchmark and Evaluation.}
The benchmark contains 700 held-out products from 8 market regions and 31 top-level categories, covering 367 fine-grained category paths. Its normalized Shannon evenness is 0.833 at the top level and 0.946 at the fine-grained level. Most source images are selected through judge-score filtering and manual review. We also include lower-scoring white-background images to reduce ceiling effects.

We evaluate generated images using the rubric in Table~\ref{tab:judge-criteria}. To reduce the risk of overfitting the Gemini-2.5-Flash evaluator used during training, the benchmark evaluation is conducted with Gemini-2.5-Pro. The judge reports a weighted overall score and eleven quality scores on a 1--10 scale. We report the overall score together with professional polish, layout hierarchy, and commercial desire. We further validate the results through blind randomized pairwise human evaluation.

\FloatBarrier
\begin{figure*}[!t]
\centering
\includegraphics[width=\textwidth]{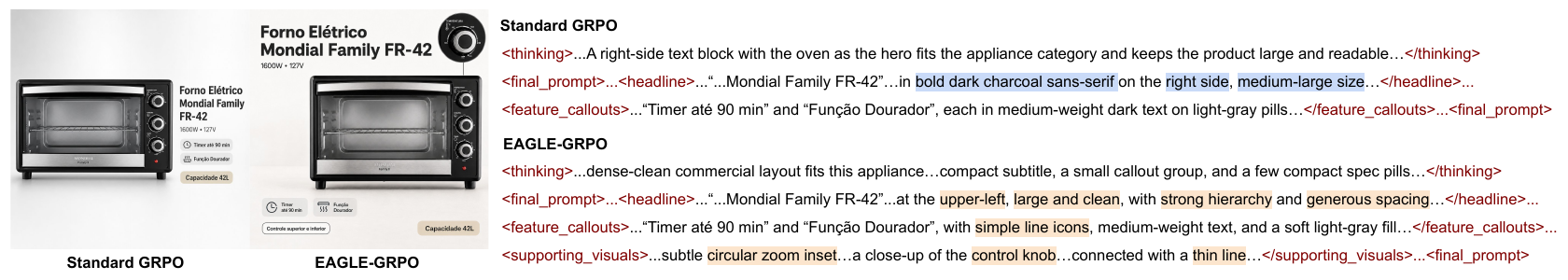}
\caption{Qualitative comparison between Standard GRPO and
\textsc{Eagle}-GRPO on one held-out oven product using GPT-Image-2.}
\label{fig:qualitative}
\end{figure*}

\begin{table}[!b]
\centering
\scriptsize
\setlength{\tabcolsep}{1.0pt}
\begin{tabular*}{\columnwidth}{@{\extracolsep{\fill}}p{0.34\columnwidth}cccc@{}}
\toprule
\textbf{Prompt writer} &
\begin{tabular}[c]{@{}c@{}}\textbf{Overall}\end{tabular} &
\begin{tabular}[c]{@{}c@{}}\textbf{Professional}\\\textbf{Polish}\end{tabular} &
\begin{tabular}[c]{@{}c@{}}\textbf{Layout}\\\textbf{Hierarchy}\end{tabular} &
\begin{tabular}[c]{@{}c@{}}\textbf{Commercial}\\\textbf{Desire}\end{tabular} \\
\midrule
\multicolumn{5}{l}{\textbf{GPT-Image-2}} \\
GPT-5.4 (Closed) & 9.205 & 9.561 & 9.443 & 8.845 \\
Gemini-2.5-Pro (Closed) & 9.077 & 9.249 & 9.459 & 8.894 \\
GPT-4o (Closed) & 8.049 & 8.704 & 8.514 & 7.634 \\
Qwen3-VL-32B (Open) & 8.914 & 9.096 & 9.237 & 8.688 \\
Qwen3-VL-32B SFT (Open) & 8.779 & 9.097 & 9.136 & 8.346 \\
Qwen3-VL-8B (Open) & 8.095 & 8.832 & 8.700 & 7.826 \\
Qwen3-VL-8B SFT (Open) & 9.056 & 9.320 & 9.556 & 8.879 \\
Standard GRPO (Trained) & 9.130 & 9.516 & 9.383 & 8.678 \\
PromptEnhancer (Trained) & 8.921 & 9.523 & 9.319 & 8.494 \\
\textsc{Eagle}-GRPO (Trained) & \textbf{9.295} & \textbf{9.634} & \textbf{9.569} & \textbf{8.971} \\
\midrule
\multicolumn{5}{l}{\textbf{FLUX.2~[klein] 9B}} \\
GPT-5.4 (Closed) & 7.963 & 8.524 & 8.408 & \textbf{6.504} \\
Gemini-2.5-Pro (Closed) & 7.035 & 7.144 & 8.017 & 6.299 \\
GPT-4o (Closed) & 5.475 & 6.084 & 6.393 & 4.679 \\
Qwen3-VL-32B (Open) & 5.084 & 5.156 & 6.036 & 4.138 \\
Qwen3-VL-32B SFT (Open) & 3.732 & 4.364 & 4.572 & 2.901 \\
Qwen3-VL-8B (Open) & 6.575 & 7.287 & 7.372 & 6.017 \\
Qwen3-VL-8B SFT (Open) & 7.306 & 7.634 & 8.156 & 6.331 \\
Standard GRPO (Trained) & 7.924 & 8.526 & 8.233 & 6.294 \\
PromptEnhancer (Trained) & 7.714 & 8.519 & 8.356 & 6.238 \\
\textsc{Eagle}-GRPO (Trained) & \textbf{7.966} & \textbf{8.551} & \textbf{8.410} & 6.408 \\
\bottomrule
\end{tabular*}
\caption{Prompt-writer performance under GPT-Image-2 and FLUX.2~[klein] 9B. All scores are on a 0–10 scale. Overall is the weighted mean across 11 quality dimensions.}
\label{tab:planner-editor-matrix}
\end{table}
\subsection{Main Results}

\paragraph{Benchmark results.}
Table~\ref{tab:planner-editor-matrix} compares different prompt writers under both image editors. \textsc{Eagle}-GRPO achieves the highest overall score in both settings. With GPT-Image-2, \textsc{Eagle}-GRPO reaches an overall score of 9.295, outperforming all compared baselines, including GPT-5.4 and Gemini-2.5-Pro. It also achieves the highest scores in professional polish, layout hierarchy, and commercial desire dimensions. With FLUX.2~[klein], \textsc{Eagle}-GRPO achieves the highest overall score of 7.966. It also obtains the best professional-polish and layout-hierarchy scores.

\paragraph{Human evaluation.}
Since the training reward and benchmark are both scored by automated VLM judges, we further validate our results with human evaluation to guard against judge-specific biases. Human evaluation further supports the benchmark results (Table~\ref{tab:human-pairwise}). Human raters prefer \textsc{Eagle}-GRPO over every evaluated comparator in both settings. In particular, against Standard GRPO, \textsc{Eagle}-GRPO achieves win rates of 62\% under GPT-Image-2 and 54\% under FLUX.2~[klein].

\begin{table}[!t]
\centering
\scriptsize
\setlength{\tabcolsep}{2pt}
\begin{tabular*}{\columnwidth}{@{\extracolsep{\fill}}>{\centering\arraybackslash}p{0.30\columnwidth}>{\centering\arraybackslash}p{0.30\columnwidth}>{\centering\arraybackslash}p{0.30\columnwidth}@{}}
\toprule
& \multicolumn{2}{c}{\textbf{Win rate of \textsc{Eagle}-GRPO}} \\
\textbf{\textsc{Eagle}-GRPO vs.} & \textbf{GPT-Image-2} & \textbf{FLUX-2-Klein} \\
\midrule
GPT-5.4 & 52\% & 52\% \\
Gemini-2.5-Pro & 51\% & 53\% \\
GPT-4o & 82\% & 87\% \\
Qwen3-VL-32B & 75\% & 59\% \\
Qwen3-VL-32B SFT & 56\% & 57\% \\
Qwen3-VL-8B & 79\% & 56\% \\
Qwen3-VL-8B SFT & 56\% & 59\% \\
PromptEnhancer & 58\% & 57\% \\
Standard GRPO & 62\% & 54\% \\
\bottomrule
\end{tabular*}
\caption{Blind pairwise human evaluation. Each cell reports the win rate of \textsc{Eagle}-GRPO over the listed baseline. }
\label{tab:human-pairwise}
\end{table}

\section{Ablation Studies}
\label{sec:ablation}

We design the ablation studies to isolate which component drives the observed improvements. In particular, we examine whether the gains arise from element-level credit assignment itself and how this design compares with state-of-the-art alternatives. We conduct two controlled comparisons. 

\subsection{Element-Level versus Sequence-Level Credit}
To isolate whether the gains arise specifically from element-level credit assignment, we compare \textsc{Eagle}-GRPO with a matched Standard GRPO baseline under the same training setup. For each image editor, we use the same SFT initialization, rollout budget, reward function, optimizer settings, and 200-update schedule. The only difference is credit assignment. Standard GRPO applies one rollout-level advantage to all prompt tokens, whereas \textsc{Eagle}-GRPO redistributes the advantage across prompt elements.

Element-aware credit improves the benchmark score in both cases. The overall score improves from 9.130 to 9.295 with GPT-Image-2 and from 7.924 to 7.966 with FLUX (Table~\ref{tab:planner-editor-matrix}). Human evaluation shows the same trend (Table~\ref{tab:human-pairwise}).




\paragraph{Sustained Gains and Image Quality.} Beyond final image fidelity, we examine whether element-level credit supports more sustained optimization. Figure~\ref{fig:training-dynamics} reports validation reward on a fixed 500-item set every 10 updates. Standard GRPO improves earlier but later plateaus or declines. In contrast, \textsc{Eagle}-GRPO continues improving and reaches a higher validation reward under both image editors.


\begin{figure}[h]
\centering
\includegraphics[width=\linewidth]{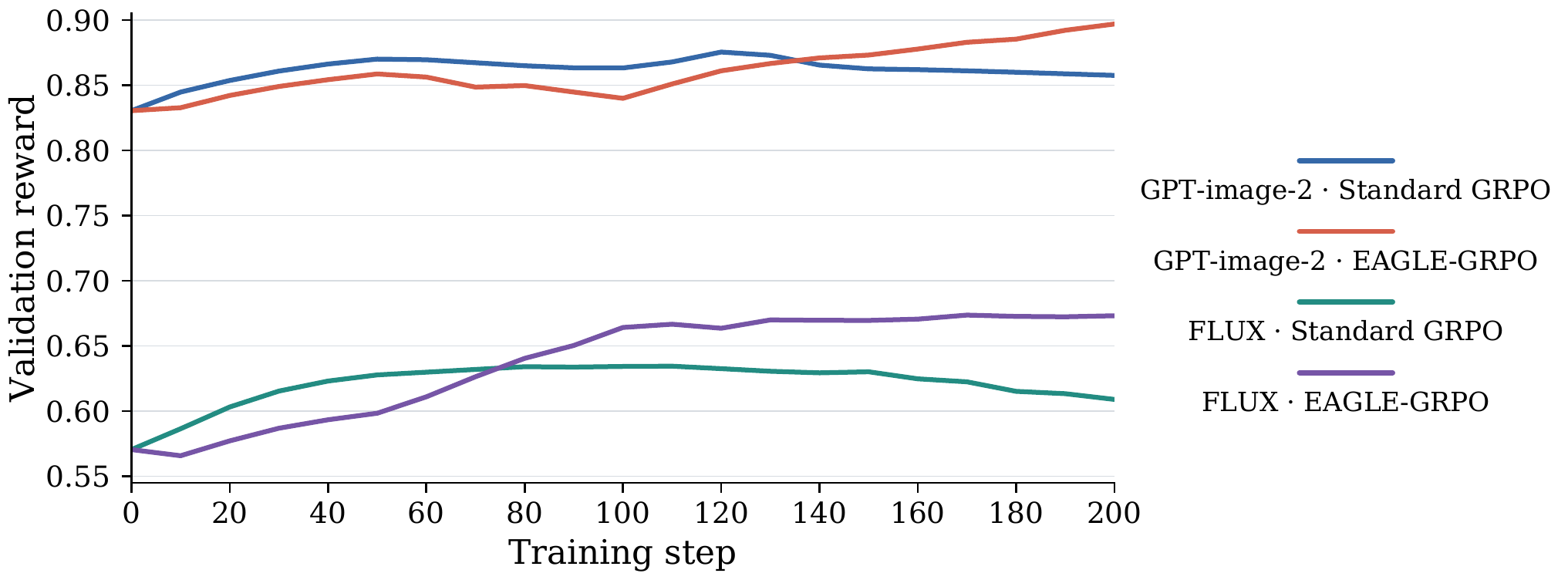}
\caption{Validation reward, evaluated every 10 updates.}
\label{fig:training-dynamics}
\end{figure}

\paragraph{Qualitative comparison.}
Figure~\ref{fig:qualitative} compares Standard GRPO and
\textsc{Eagle}-GRPO on one held-out product using GPT-Image-2. Both policies identify the main product features. \textsc{Eagle}-GRPO strengthens the \elementname{headline} through a larger upper-left placement with clearer hierarchy, and uses \elementname{supporting\_visuals} to request a close-up inset of the control knob. The resulting image presents the product information more clearly while preserving the overall visual hierarchy.

\subsection{Element-Level Credit versus Fine-Grained Scoring}

To determine whether fine-grained element scoring alone is sufficient when the policy update remains sequence-level, we compare \textsc{Eagle}-GRPO with an adaptation of PromptEnhancer~\cite{wang2025promptenhancer}, a recent state-of-the-art prompt-rewriting method based on chain-of-thought rewriting and fine-grained reward. We adapt its design to our e-commerce image-editing setting. The model is initialized with SFT, following the chain-of-thought prompt-rewriting format of the original method. During GRPO, Gemini-2.5-Flash scores each schema element from 0 to 10, and the averaged score is used as a sequence-level reward. This preserves fine-grained scoring while excluding element-level credit assignment.

\textsc{Eagle}-GRPO achieves higher benchmark scores under both editors: 9.295 versus 8.921 with GPT-Image-2, and 7.966 versus 7.714 with FLUX (Table~\ref{tab:planner-editor-matrix}). Human raters also prefer \textsc{Eagle}-GRPO in 58\% and 57\% of comparisons (Table~\ref{tab:human-pairwise}). These results reassure that the policy update also benefits from element-level credit assignment.

\section{Element-Level Diagnostics}
\label{sec:case-study}
The following analyzes use the GPT-image-2 run. Its stronger prompt adherence makes the link between structured prompt decisions and rendered content easier
to inspect.

\subsection{Rollout-Specific Element Credit}
Figure~\ref{fig:mechanism-credit} visualizes the credits computed for 10 samples in a rollout group. In \textsc{Eagle}-GRPO (Figure~\ref{fig:mechanism-credit}(b)), colors vary across rows within the same rollout, showing that different elements receive different credit even when the overall image reward is the same. This demonstrates that \textsc{Eagle}-GRPO assigns different credits to the same element across rollouts, and these credits in turn produce different final advantages for the corresponding element tokens.

\begin{figure}[h]
\centering
\includegraphics[width=\columnwidth]{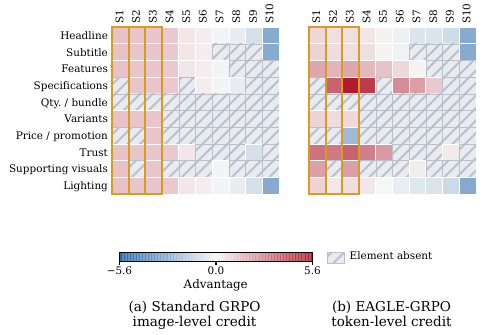}
\caption{Comparison of sample-level credit in Standard GRPO and element-specific credit in \textsc{Eagle}-GRPO. Each column is one rollout; each row is a prompt element. Color indicates the credit assigned to that element in that rollout. }
\label{fig:mechanism-credit}
\end{figure}
\FloatBarrier

\subsection{Category-Level Patterns}
We aggregate optional-element statistics by product category to examine how often each element is selected and how its credit relates to image reward. Figure~\ref{fig:category-element} reports these patterns across seven distinct categories and eight optional elements.

The clearest patterns appear for \elementname{specification} in Home \& Living and \elementname{features} in Health and Sports \& Outdoors (48\% presence, $r=0.78$). In contrast, \elementname{subtitle} is selected more frequently in Home \& Living but has a weaker reward correlation. This shows that frequent selection does not necessarily imply stronger alignment with image quality. These correlations describe reward allocation within \textsc{Eagle}-GRPO, not the causal effect of adding an element.

\begin{figure}[!t]
\centering
\includegraphics[width=\columnwidth]{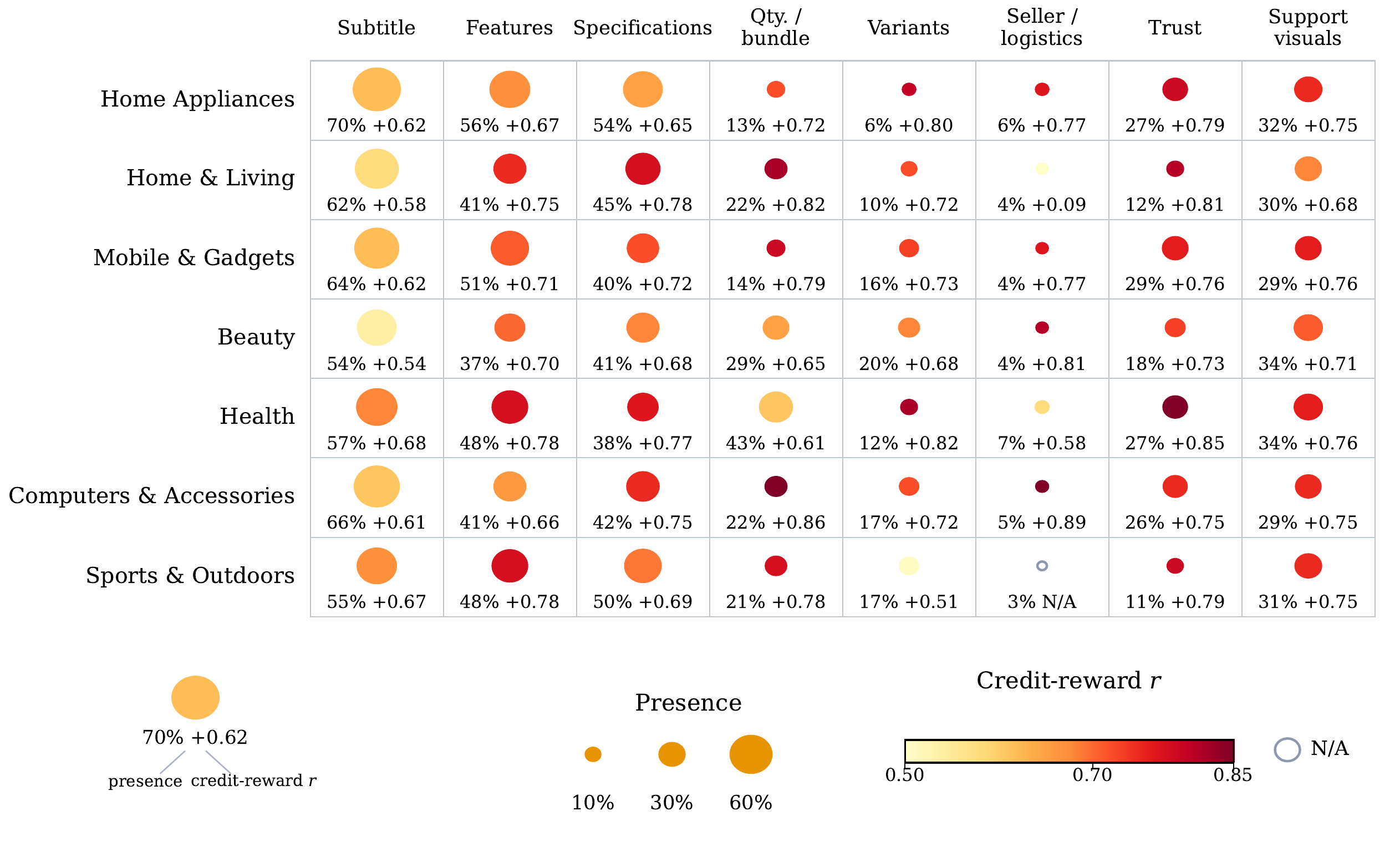}
\caption{Category-level diagnostics for eight optional prompt elements. Marker size indicates selection frequency, darker colors indicate stronger credit--reward correlation.}
\label{fig:category-element}
\end{figure}

\subsection{Element Evolution Across Checkpoints}
To examine how element-level credit affects prompt behavior, we track \elementname{promotion\_information} for one fixed product, sampling 20 prompts from the SFT model and five GRPO checkpoints. Its usage rate first rises at step 40, reflecting early exploration, but the element receives negative kernel credit despite the slightly higher raw reward of prompts that contain it, suggesting that the apparent reward gain instead comes from other co-occurring elements. Continued negative credit then drives its usage to zero by step 200.


Figure~\ref{fig:tw-prompt-evolution} shows how these changes are reflected in the generated images. The step-40 image includes a generic promotional message meaning ''storewide free shipping from \$99,'' whereas the step-200 output adds a control-panel close-up and grounded product details meaning ''high-temperature lock'', ''leak detection with automatic water shutoff'', etc. This illustrates a shift from generic promotion toward product-specific information and richer visual support.

\section{Conclusion}
We presented \textsc{Eagle}-GRPO, a kernel-based method that decomposes the centered image reward into per-element credits from variation already present in a standard GRPO rollout group. 
These credits are mapped back to their corresponding token spans while preserving per-sample 
advantage. 
We proved that the \textsc{Eagle}-GRPO beats Standard GRPO and competitive prompt writers across GPT-Image-2 and FLUX.2.
Element-level and longitudinal analysis further reveal which prompt decisions are reinforced or suppressed, making reward-driven optimization more targeted and interpretable.

\paragraph{Limitations and Future Directions.}
In the setting of this work, the decomposition fits 18 element kernels, each capturing element presence and semantic variation. Different product categories emphasize different elements, so a fixed 18-kernel design leaves some dimensions sparse. Future work could address this by tailoring the kernel to each category, tuning 
$\lambda$ per category, or extending the decomposition to pairwise interaction kernels.


\bibliography{main}

\end{document}